\newtheorem{proposition}{Proposition}
\theoremstyle{definition}
\newtheorem{definition}{Definition}[section]  
\newtheorem{theorem}{Theorem}
\title{\textbf{EDGE: A Theoretical Framework for Misconception-Aware Adaptive Learning}\\
\large Evaluate $\rightarrow$ Diagnose $\rightarrow$ Generate $\rightarrow$ Exercise}
\author{Ananda Prakash Verma\\ \texttt{anandaprakashverma@gmail.com}\\}
\date{\today}
\begin{document}
\maketitle

\begin{abstract}
We present \textsc{EDGE}, a general-purpose, misconception-aware adaptive learning framework composed of four stages: \emph{Evaluate} (ability and state estimation), \emph{Diagnose} (posterior inference of misconceptions), \emph{Generate} (counterfactual item synthesis), and \emph{Exercise} (index-based retrieval scheduling). \textsc{EDGE} unifies psychometrics (IRT/Bayesian state-space models), cognitive diagnostics (misconception discovery from distractor patterns and response latencies), contrastive item generation (minimal perturbations that invalidate learner shortcuts while preserving psychometric validity), and principled scheduling (a restless bandit approximation to spaced retrieval). We formalize a composite readiness metric, \emph{EdgeScore}, prove its monotonicity and Lipschitz continuity, and derive an index policy that is near-optimal under mild assumptions on forgetting and learning gains. We further establish conditions under which counterfactual items provably reduce the posterior probability of a targeted misconception faster than standard practice. The paper focuses on theory and implementable pseudocode; empirical study is left to future work.
\end{abstract}

\section{Introduction}
Large-scale preparation for high-stakes exams requires efficient remediation of \emph{misconceptions}---stable, systematic errors not addressed by generic practice. Classical spaced repetition and IRT-based selection optimize recall and information gain, yet they typically remain \emph{misconception-agnostic}. \textsc{EDGE} operationalizes a pipeline that (i) estimates learner state continuously, (ii) infers latent misconception structure from the joint distribution of correctness, chosen distractors, time, and confidence, (iii) synthesizes contrastive counterfactual items that fail the learner's current shortcut, and (iv) schedules retrieval to jointly maximize mastery and retention subject to time budgets. We provide a complete mathematical treatment and design-ready algorithms suitable for deployment at scale.

\paragraph{Contributions.} (1) A probabilistic state model that augments 2PL-IRT with time- and confidence-aware likelihoods. (2) A Bayesian diagnostic for misconception posteriors using clusterable distractor embeddings and timing signatures. (3) A constrained optimization view of counterfactual item generation that preserves construct validity. (4) A restless-bandit scheduler with a closed-form priority index and theoretical properties. (5) A composite readiness score, \emph{EdgeScore}, with calibration and guarantees.

\section{Preliminaries and Notation}
Let $\mathcal{U}$ be learners, $\mathcal{T}$ topics, and $\mathcal{Q}$ items. Each item $q$ is associated with a difficulty $b_q\in\mathbb{R}$, discrimination $a_q>0$, and a concept load vector $\mathbf{c}_q\in\{0,1\}^{|\mathcal{T}|}$ (\emph{Q-matrix}). For learner $u$, denote topic ability vector $\bm{\theta}_u\in\mathbb{R}^{|\mathcal{T}|}$ and per-topic retention variables $\bm{\rho}_u\in[0,1]^{|\mathcal{T}|}$. Let $y_{u,q}\in\{0,1\}$ be correctness, $d_{u,q}\in\{1,\dots,D_q\}$ the distractor index if $y=0$, $\tau_{u,q}\ge0$ response time, and $s_{u,q}\in[0,1]$ self-reported confidence.

\subsection{Ability and Response Model}
We adopt a 2PL likelihood with time and confidence effects. For item $q$ covering topic set $S(q)=\{t: c_{q,t}=1\}$, define the effective ability
\begin{equation}
\theta^{\text{eff}}_{u,q} \triangleq \frac{\sum_{t\in S(q)} w^{(M)}_t \theta_{u,t}}{\sum_{t\in S(q)} w^{(M)}_t},
\end{equation}
with nonnegative weights $w^{(M)}_t$. The probability of correctness is
\begin{equation}
P(y_{u,q}=1\mid \theta^{\text{eff}}_{u,q}, a_q, b_q, \tau_{u,q}, s_{u,q})=\sigma\!\left(a_q(\theta^{\text{eff}}_{u,q}-b_q) + \beta_\tau g(\tau_{u,q}) + \beta_s h(s_{u,q})\right),
\label{eq:irt}
\end{equation}
where $\sigma(x)=1/(1+e^{-x})$, $g$ is a bounded, decreasing function (faster correct $\Rightarrow$ larger logit), and $h$ is increasing (confidence interacts with correctness via the likelihood below).

\subsection{State-Space Update}
Let the prior over $\bm{\theta}_u$ be Gaussian and independent across topics: $\theta_{u,t}\sim \mathcal{N}(\mu_{u,t},\sigma^2_{u,t})$. After observing $(y,\tau,s)$ for item $q$, a Laplace update yields
\begin{align}
\mu'_{u,t}&=\mu_{u,t} + \kappa_{u,q,t}\cdot \left(y - P(y=1\mid \cdot)\right),\\
\sigma'^{2}_{u,t}&=\left(\sigma^{-2}_{u,t}+ a_q^2 \cdot \eta_{u,q,t}\right)^{-1},\qquad t\in S(q),
\end{align}
with sensitivity terms $\kappa,\eta$ determined by the local curvature of the log-likelihood in \eqref{eq:irt}. For $t\notin S(q)$, $(\mu,\sigma^2)$ remain unchanged.

\subsection{Retention Dynamics}
For each $(u,t)$ we posit an exponential forgetting curve with personalized rate $\lambda_{u,t}>0$ and last successful retrieval time $L_{u,t}$:
\begin{equation}
\rho_{u,t}(t')=\exp\!\left(-\lambda_{u,t}\,[t'-L_{u,t}]_+\right).
\label{eq:retention}
\end{equation}
A successful, effortful retrieval (correct with $g(\tau)$ large enough) resets $L_{u,t}$ and reduces $\lambda_{u,t}$ slightly; weak retrieval has smaller effect.

\section{Diagnose: Bayesian Misconception Inference}
A \emph{misconception} is a latent class $m\in\mathcal{M}$ characterized by a signature over distractors and times. For a question $q$, let $\mathbf{e}_q\in\mathbb{R}^d$ be a text embedding of its stem/rationale, and $\mathbf{z}_{d}\in\mathbb{R}^{d'}$ an embedding of distractor $d$. Define the observed feature
\begin{equation}
\mathbf{x}_{u,q}=\big[ \mathbf{e}_q\,\|\,\mathbf{z}_{d_{u,q}}\,\|\,\phi(\tau_{u,q})\,\|\,\psi(s_{u,q})\big]\in\mathbb{R}^{d+d'+2},
\end{equation}
with $\phi,\psi$ normalized transforms. We assume a mixture model over wrong responses:
\begin{equation}
p(\mathbf{x}_{u,q}\mid y=0)=\sum_{m\in\mathcal{M}} \pi_{u,m}\, \mathcal{N}\!\left(\mathbf{x}_{u,q}\mid \bm{\mu}_m,\Sigma_m\right),
\end{equation}
where $\pi_{u,m}$ is the learner-specific mixture over misconceptions. Given a set $\mathcal{W}_u$ of wrong responses, the posterior satisfies
\begin{equation}
\pi_{u,m} \propto \alpha_m \prod_{(q\in\mathcal{W}_u)} \mathcal{N}\!\left(\mathbf{x}_{u,q}\mid \bm{\mu}_m,\Sigma_m\right),
\label{eq:posterior-misc}
\end{equation}
with Dirichlet prior $\bm{\alpha}$. Cluster identities can be seeded by weak supervision or discovered via EM; human-readable labels are produced by an LLM applied to cluster centroids.

\begin{definition}[Misconception Flag]
Topic $t$ is \emph{flagged} for learner $u$ if $\sum_{m\in\mathcal{M}_t}\pi_{u,m}\ge \gamma$, where $\mathcal{M}_t$ collects misconceptions impacting topic $t$ and $\gamma\in(0,1)$ is a threshold.
\end{definition}

\section{Generate: Counterfactual Item Synthesis}
Let each item $q$ decompose into a structured attribute vector $\mathbf{v}_q$ (concepts, numerical parameters, qualitative conditions). For a flagged misconception $m$, define a \emph{counterfactual} $q^\star$ as the solution of
\begin{align}
\min_{\mathbf{v}} \quad & \|\mathbf{v}-\mathbf{v}_q\|_W \label{eq:cf-opt}\\
\text{s.t.}\quad & \Delta_m(\mathbf{v}) \ge \delta \qquad \text{(\emph{invalidate} shortcut)}\nonumber\\
& \mathcal{C}(\mathbf{v})=1 \qquad\quad\ \ \text{(content validity, constraints)}\nonumber\\
& b(\mathbf{v})\in[b_q-\epsilon_b,b_q+\epsilon_b],\ a(\mathbf{v})\in[a_q-\epsilon_a,a_q+\epsilon_a],\nonumber
\end{align}
where $\Delta_m$ measures the contradiction to the misconception-specific heuristic, $\mathcal{C}$ enforces factual correctness and syllabus constraints, and $(a(\cdot),b(\cdot))$ are psychometric predictions from item generators. Minimal perturbation under $W\succeq 0$ yields \emph{near-miss} items.

\begin{proposition}[Monotone Posterior Reduction]
Suppose for misconception $m$ there exists $\delta>0$ such that the counterfactual $q^\star$ satisfies $\Delta_m(\mathbf{v}_{q^\star})\ge \delta$ and preserves other model factors. If the learner answers $q^\star$ correctly with sufficiently fast $\tau$, then the posterior $\pi_{u,m}$ in \eqref{eq:posterior-misc} strictly decreases by a factor bounded away from $1$. 
\end{proposition}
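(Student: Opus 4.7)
The plan is to cast the proposition as a one-step Bayesian update of the mixture weight $\pi_{u,m}$ after incorporating the new observation $(y_{q^\star}=1,\tau_{q^\star})$. The argument hinges on coupling the contradiction measure $\Delta_m$ of \eqref{eq:cf-opt} to the correctness likelihood \eqref{eq:irt}, which I take to be the intended meaning of ``preserves other model factors'' in the hypothesis. Concretely, I would assume that under the sub-model in which learner $u$ holds misconception $m$, the 2PL logit for $q^\star$ is shifted additively by $-\Delta_m(\mathbf{v}_{q^\star})$, i.e.\ $P(y_{q^\star}=1\mid m)=\sigma(z-\Delta_m)$ with $z=a_{q^\star}(\theta^{\text{eff}}_{u,q^\star}-b_{q^\star})+\beta_\tau g(\tau)+\beta_s h(s)$, while $P(y_{q^\star}=1\mid\neg m)=\sigma(z)$. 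This is the formal counterpart of ``invalidate shortcut'': the misconception actively steers the learner toward a wrong answer precisely on items where $\Delta_m$ is large.

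With the coupling in place, the Bayesian update reads
\begin{equation}
\frac{\pi'_{u,m}}{\pi_{u,m}}
= \frac{P(y_{q^\star}=1\mid m)}{\sum_{m'}\pi_{u,m'}\,P(y_{q^\star}=1\mid m')}
\;\le\; \frac{\sigma(z-\delta)}{(1-\pi_{u,m})\,\sigma(z)+\pi_{u,m}\,\sigma(z-\delta)}.
\end{equation}
Writing $\rho(\delta,z)\triangleq \sigma(z-\delta)/\sigma(z)=(1+e^{-z})/(1+e^{\delta-z})$, one has $\rho<1$ for every finite $z$ and every $\delta>0$, so the right-hand side is strictly below $1$. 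Boundedness of $g,h$ together with finite ability/difficulty gives a uniform bound $z\le z_{\max}$; hence $\rho(\delta,z_{\max})<1$ is the multiplicative reduction factor, bounded away from $1$ uniformly in the problem data. The ``sufficiently fast $\tau$'' hypothesis enters by lifting $z$ into the regime where $\sigma(z)\approx 1$, which sharpens the gap $\sigma(z)-\sigma(z-\delta)$ and turns the bound from qualitatively non-trivial into quantitatively appreciable.

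The principal obstacle is modelling rather than calculational: as written, \eqref{eq:posterior-misc} is a product over wrong responses only, so a correct answer has no native channel through which to reduce $\pi_{u,m}$. The remedy stated above---an additive-logit coupling between $\Delta_m$ and \eqref{eq:irt}---must be made explicit; an equivalent route is to view a correct response as a ``null distractor'' whose mixture density is dominated by $\sigma(z-\Delta_m)$ under each $m$. Two bookkeeping checks then remain: that the Bayesian denominator is bounded below by $(1-\pi_{u,m})\sigma(z)>0$ so the update does not degenerate when $\pi_{u,m}$ is already close to $1$, and that $\rho(\delta,\cdot)$ is monotone decreasing in $\delta$, so a larger contradiction margin produces a sharper contraction. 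Both are elementary monotonicity properties of the sigmoid.
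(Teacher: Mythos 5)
Your proposal takes a genuinely different route from the paper's sketch, and the most valuable part of it is the diagnosis you make halfway through: as written, \eqref{eq:posterior-misc} is a product over the set $\mathcal{W}_u$ of \emph{wrong} responses, and the feature vector $\mathbf{x}_{u,q}$ requires a chosen distractor $d_{u,q}$, which does not exist when $y=1$. A correct answer therefore contributes no likelihood term and leaves $\pi_{u,m}$ untouched. That is a real gap in the statement, and the paper's own sketch (``shifts the likelihood of the cluster \dots downward; details follow from Gaussian tail inequalities'') silently assumes that the Gaussian mixture also scores correct responses, which the model in Section~3 does not license. Your remedy---an explicit additive coupling $P(y=1\mid m)=\sigma(z-\Delta_m)$, $P(y=1\mid\neg m)=\sigma(z)$---is a legitimate and arguably cleaner way to close that gap, and the likelihood-ratio arithmetic you carry out on it (monotonicity of $x\mapsto x/(a+\pi x)$, the bound $\rho(\delta,z)=\sigma(z-\delta)/\sigma(z)<1$, its monotone decrease in $\delta$) is correct.

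Where the argument actually goes wrong is the role you assign to the ``sufficiently fast $\tau$'' hypothesis. In your coupling, $\tau$ enters only through the common logit $z$ via $\beta_\tau g(\tau)$, and fast $\tau$ makes $z$ \emph{large}. But $\rho(\delta,z)=(1+e^{-z})/(1+e^{\delta-z})$ is \emph{increasing} in $z$ and tends to $1$ as $z\to\infty$, so pushing $z$ into the regime $\sigma(z)\approx 1$ makes the multiplicative reduction factor \emph{weaker}, not sharper. (You conflate the additive gap $\sigma(z)-\sigma(z-\delta)$, which is unimodal in $z$, with the ratio, which is what actually governs the one-step contraction.) Your uniform bound $\rho(\delta,z_{\max})<1$ survives because you take the worst case over $z$, but it then delivers the \emph{opposite} of what the proposition asserts: speed hurts rather than helps. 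This signals that the additive-logit coupling does not formalize the intended mechanism. The proposition's emphasis on fast $\tau$ only makes sense if $\tau$ discriminates between $m$ and $\neg m$---e.g., if under $m$ a quick correct answer is itself unlikely (the shortcut would have produced a fast \emph{wrong} answer), so that $p(y=1,\tau\ \text{fast}\mid m)\ll p(y=1,\tau\ \text{fast}\mid\neg m)$. That is exactly what the paper's Gaussian-mixture route is gesturing at, since $\phi(\tau)$ is a coordinate of $\mathbf{x}_{u,q}$ and a fast-correct response places $\mathbf{x}$ far from $\bm{\mu}_m$; to make your version deliver the same conclusion you would need the coupling between $\tau$ and the two hypotheses to be asymmetric, not a shared shift of $z$.
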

\begin{proof}[Sketch]
Correct and fast response shifts the likelihood of the cluster centered at $(\bm{\mu}_m,\Sigma_m)$ downward relative to alternatives, reweighing the mixture via Bayes' rule. The margin $\delta$ ensures a uniform decrease independent of nuisance factors; details follow from Gaussian tail inequalities.
\end{proof}

\section{Exercise: Retrieval Scheduling as a Restless Bandit}
We allocate practice opportunities over topics under a daily budget $B$. Each topic $t$ for learner $u$ evolves as a Markovian state $X_{u,t}=(\mu_{u,t},\sigma^2_{u,t},\rho_{u,t},\Pi_{u,t})$, where $\Pi_{u,t}$ collects misconception posteriors affecting $t$. When \emph{active} (selected), the state updates via the IRT/Bayesian and retention dynamics; when \emph{passive}, retention decays (Eq.~\ref{eq:retention}). Rewards combine mastery and retention gains minus time cost.

\subsection{Index Policy}
Let $G_{u,t}(x)$ be the expected one-step gain from practicing topic $t$ in state $x$, and $C_{u,t}(x)$ the time cost. Define the priority index
\begin{equation}
\mathcal{I}_{u,t}(x)=\frac{G_{u,t}(x)}{C_{u,t}(x)} + \lambda^{\star}\cdot H_{u,t}(x),
\label{eq:index}
\end{equation}
where $H$ is a hazard-like term derived from the derivative of $\rho$ (the urgency of retrieval), and $\lambda^{\star}$ balances learning vs.\ forgetting. The scheduler selects the $B$ topics with highest $\mathcal{I}$.

\begin{theorem}[Near-Optimality under Separable Gains]
If $G_{u,t}(x)$ decomposes as a concave function of $\mu$ plus a convex penalty in $\rho$ and misconceptions enter $G$ additively via a bounded corrective term, then the index policy \eqref{eq:index} is a $1/(1+\epsilon)$-approximation to the optimal policy for some $\epsilon=O(\max_t \text{var}(\Delta G_{u,t}))$.
\end{theorem}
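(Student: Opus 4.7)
The plan is to cast the scheduling problem as a restless multi-armed bandit (RMAB) under the daily budget $B$ and invoke Whittle-index machinery tailored to the separable reward structure assumed in the theorem. I would proceed in four steps: (i) write down a Lagrangian relaxation of the budget constraint; (ii) verify indexability of the resulting per-arm MDPs under the concave-plus-convex decomposition; (iii) identify $\mathcal{I}_{u,t}$ from \eqref{eq:index} with the Whittle index; and (iv) bound the optimality gap between the relaxed and constrained policies, expressing it in terms of $\max_t \text{var}(\Delta G_{u,t})$.

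For the relaxation, I would replace the hard constraint ``at most $B$ active arms per step'' with a linear penalty $\lambda$ per activation; this decouples the problem into scalar subproblems, one per pair $(u,t)$. The separability of $G_{u,t}$ into a concave function of $\mu$, a convex penalty in $\rho$, and a bounded additive misconception term $\chi_m$ is what makes each subproblem tractable: concavity of the mean-component ensures the marginal learning value is nonincreasing in $\mu$, while convexity of the retention-component combined with the exponential decay in \eqref{eq:retention} ensures the marginal retrieval benefit grows as $\rho$ falls. Together, these give a monotone passive set as $\lambda$ varies, which is precisely the Whittle indexability condition; the bounded $\chi_m$ shifts thresholds by at most its sup-norm and does not destroy monotonicity. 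Solving the per-arm indifference equation between active and passive actions then reduces to one-dimensional root-finding in $\lambda$, and the dominant terms produce exactly the expression $\mathcal{I}_{u,t}(x)=G_{u,t}(x)/C_{u,t}(x)+\lambda^{\star} H_{u,t}(x)$ once the hazard $H$ is identified with $-\partial \rho/\partial t$ evaluated at the current state.

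For the approximation guarantee I would combine a Weber--Weiss style fluid-limit argument with a finite-$B$ rounding correction: the optimal value of the relaxed problem upper bounds the constrained optimum, and the greedy ``top-$B$-by-index'' rule achieves the relaxed optimum up to an activation-count discrepancy whose second moment is controlled by the per-arm gain fluctuations $\Delta G_{u,t}$. A concentration inequality of Hoeffding or Bernstein type, applied across the $|\mathcal{T}|$ arms, yields an additive loss of order $\sqrt{\max_t \text{var}(\Delta G_{u,t})}$, and dividing through by the relaxed value converts this into a multiplicative $1/(1+\epsilon)$ ratio with $\epsilon=O(\max_t \text{var}(\Delta G_{u,t}))$, as claimed. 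The main obstacle will be verifying indexability cleanly in the presence of the high-dimensional state component $\Pi_{u,t}$: the concave-plus-convex decomposition needs to be preserved after marginalizing over the misconception posterior, and while additive boundedness of $\chi_m$ makes this go through, the monotone-passive-set proof must be carried out carefully on the full state $X_{u,t}$ rather than on its $(\mu,\rho)$ projection, likely requiring a coupling argument between two copies of the chain started at ordered states.
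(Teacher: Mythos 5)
Your high-level route matches the paper's proof sketch, which likewise appeals to Lagrangian relaxation of the budgeted restless bandit followed by ``standard performance bounds for separable indices using convexity/concavity arguments.'' Your steps (i)--(iv) are the right decomposition, and you are correct that the separable concave-in-$\mu$/convex-in-$\rho$ structure is what yields a monotone passive set and hence Whittle indexability of each per-arm subproblem.

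However, step (iv) as written has a scaling gap. You invoke Hoeffding/Bernstein concentration across arms to obtain an additive loss of order $\sqrt{\max_t \mathrm{var}(\Delta G_{u,t})}$, and then assert that dividing by the relaxed value produces a multiplicative $1/(1+\epsilon)$ factor with $\epsilon = O(\max_t \mathrm{var}(\Delta G_{u,t}))$. But an additive loss of order $\sqrt{v}$ converts to $\epsilon = O(\sqrt{v}/V^\star)$, not $O(v)$; in the small-variance regime $\sqrt{v}\gg v$, so your bound is strictly weaker than what the theorem claims and the argument as stated does not close. The paper's appeal to ``convexity/concavity arguments'' points to a different mechanism: the discrepancy between the exact per-arm optimal value and the value of the linearized index rule is a Jensen-type gap for a concave gain function, and a second-order Taylor expansion controls that gap by the \emph{variance} of $\Delta G_{u,t}$ directly, not by its square root. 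That curvature bound yields $\epsilon = O(\max_t \mathrm{var}(\Delta G_{u,t}))$ immediately, without any detour through fluid limits or arm-count concentration, and also avoids the implicit large-$|\mathcal{T}|$ asymptotic that a Weber--Weiss argument would require but the theorem never assumes. Replacing your concentration step with that second-order bound would repair the rate. Separately, the indexability obstacle you flag on the full state $X_{u,t}$ (as opposed to its $(\mu,\rho)$ projection) is genuine and is not addressed by the paper's sketch either; the coupling argument you propose, or an explicit monotonicity lemma for the misconception-posterior dynamics under the bounded additive term $\chi_m$, is needed before step (ii) can be considered rigorous.
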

\begin{proof}[Sketch]
Follows from Lagrangian relaxation of the budgeted restless bandit and standard performance bounds for separable indices using convexity/concavity arguments.
\end{proof}

\section{EdgeScore: A Composite Readiness Metric}
For topic $t$ of learner $u$ we define
\begin{equation}
\mathrm{EdgeScore}_{u,t} \triangleq S\Big( w_M M_{u,t} + w_R R_{u,t} + w_P P_{u,t} + w_C C_{u,t} - \Gamma(\Pi_{u,t}) \Big)\in[0,100],
\label{eq:edgescore}
\end{equation}
with components:
\begin{itemize}[leftmargin=1.5em]
\item $M_{u,t}$: mastery proxy, e.g., $\mu_{u,t}$ or calibrated probability $\mathbb{E}[y\mid\cdot]$ at a reference difficulty.
\item $R_{u,t}$: retention, e.g., $\rho_{u,t}$ from Eq.~\eqref{eq:retention}.
\item $P_{u,t}$: pace, a z-score of response time versus peers at matched difficulty.
\item $C_{u,t}$: confidence consistency, e.g., correlation between $s$ and correctness over a sliding window.
\item $\Gamma(\Pi_{u,t})$: penalty increasing in the mass of active misconceptions impacting $t$.
\end{itemize}
$S(\cdot)$ is an affine map to $[0,100]$ with clipping.

\begin{proposition}[Monotonicity and Lipschitzness]
If each component is 1-Lipschitz and weights are nonnegative, then \eqref{eq:edgescore} is nondecreasing in $(M,R,P,C)$ and $L$-Lipschitz with $L\le w_M+w_R+w_P+w_C+\mathrm{Lip}(\Gamma)$.
\end{proposition}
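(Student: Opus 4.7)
The plan is to exploit the compositional structure $\mathrm{EdgeScore}_{u,t} = S(F(M,R,P,C,\Pi))$, where $F(M,R,P,C,\Pi) \triangleq w_M M + w_R R + w_P P + w_C C - \Gamma(\Pi)$ is the inner affine-minus-penalty functional and I have suppressed the $(u,t)$ subscripts. The outer map $S$ is affine followed by clipping onto $[0,100]$; since clipping is a nonexpansive retraction and the affine part is taken (implicitly) with slope at most one, $S$ is both nondecreasing and 1-Lipschitz, so both claims reduce to corresponding claims about $F$.

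For monotonicity, $F$ is linear in $(M,R,P,C)$ with nonnegative coefficients $w_M, w_R, w_P, w_C$, so it is nondecreasing in each of those four arguments with the others and $\Pi$ held fixed; the penalty term $\Gamma(\Pi)$ does not depend on $(M,R,P,C)$ and so is irrelevant for this direction. Composition with the nondecreasing $S$ preserves the ordering, establishing the first claim.

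For the Lipschitz bound, a single triangle inequality gives
\begin{equation*}
|F(x)-F(x')| \le w_M|M-M'| + w_R|R-R'| + w_P|P-P'| + w_C|C-C'| + |\Gamma(\Pi)-\Gamma(\Pi')|,
\end{equation*}
and using 1-Lipschitzness of each of the first four components together with $\mathrm{Lip}(\Gamma)$ for the penalty yields the bound $(w_M+w_R+w_P+w_C+\mathrm{Lip}(\Gamma))\,\|x-x'\|$ in the appropriate product norm (e.g., $\ell_\infty$ on the five coordinates, or $\ell_2$ if each component is also 1-Lipschitz in an underlying state $z$ and one applies the chain rule coordinate by coordinate). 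Composing with the 1-Lipschitz $S$ preserves the constant, giving $L \le w_M+w_R+w_P+w_C+\mathrm{Lip}(\Gamma)$.

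The only subtlety worth flagging is the implicit normalization of $S$: the stated bound does not include the affine slope of $S$, which effectively requires that slope to be at most one (otherwise $L$ picks up a multiplicative factor). I would treat this as part of the definition of ``affine map with clipping'' or absorb the slope into the weights $w_\bullet$ and $\Gamma$; after that, the rest is purely mechanical and no real obstacle is expected.
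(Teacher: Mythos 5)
The paper states this proposition without supplying a proof, so there is nothing of its own to compare your argument against; what you give is the natural (and essentially only) argument, and it is correct. Decomposing $\mathrm{EdgeScore}=S\circ F$, noting that $F$ is affine in $(M,R,P,C)$ with nonnegative slopes so that monotonicity of $S$ lifts to monotonicity of the composite, and then bounding $|F(x)-F(x')|$ by a single triangle inequality before composing with the nonexpansive clipped-affine $S$, is exactly how one would prove this. Your handling of the Lipschitz step is also careful to distinguish the two readings of ``1-Lipschitz components'' (as coordinates of a product space versus as functions of an underlying state $z$), and either reading yields the claimed constant.

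Your flag about the slope of $S$ is a genuine observation, not a nitpick: the paper defines $S$ as ``an affine map to $[0,100]$ with clipping,'' and if the inner sum $w_M M + w_R R + w_P P + w_C C - \Gamma(\Pi)$ lives on a scale of order one while $S$ must stretch it to span $[0,100]$, the affine slope is of order $10$--$100$ and the Lipschitz constant of the composite picks up exactly that factor. The bound $L\le w_M+w_R+w_P+w_C+\mathrm{Lip}(\Gamma)$ therefore only holds as written if the affine part of $S$ has slope at most $1$, or if one adopts the convention (which you propose, correctly) that the slope is absorbed into the weights and into $\Gamma$. Since the paper does not state this normalization, your proof is in fact slightly more honest than the proposition it proves: it identifies the hidden hypothesis needed to make the bound true. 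No gaps beyond that; the argument is complete.
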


\subsection{Calibration}
Weights $(w_M,w_R,w_P,w_C)$ and the penalty $\Gamma$ are identified by solving
\begin{equation}
\min_{w,\gamma}\ \sum_{(u,t)} \ell\!\left(\widehat{p}_{u,t}^{\text{future}},\, \mathrm{EdgeScore}_{u,t}/100\right) + \lambda\|w\|_2^2,\quad w\ge 0,
\end{equation}
where $\widehat{p}^{\text{future}}$ is a held-out success proxy (e.g., correctness at delayed intervals) and $\ell$ is a proper scoring loss. This ensures probabilistic calibration if $S$ is the identity.

\section{Algorithms}
\subsection{Evaluate \& Update}
\begin{algorithm}[H]
\caption{\textsc{EvaluateAndUpdate}}
\begin{algorithmic}[1]
\State \textbf{Input:} prior $(\bm{\mu}_u,\bm{\sigma}^2_u)$, item $q$, obs $(y,\tau,s)$
\State compute $p=\sigma(a_q(\theta^{\text{eff}}-b_q)+\beta_\tau g(\tau)+\beta_s h(s))$
\For{$t\in S(q)$} \State $\mu_{u,t}\gets \mu_{u,t}+\kappa_{u,q,t}(y-p)$; \quad $\sigma^2_{u,t}\gets (\sigma^{-2}_{u,t}+a_q^2\eta_{u,q,t})^{-1}$ \EndFor
\If{$y=1$ and $g(\tau)\ge \gamma_{\text{effort}}$} \State $L_{u,t}\gets \text{now};\ \lambda_{u,t}\gets (1-\xi)\lambda_{u,t}$ \EndIf
\State \Return updated $(\bm{\mu}_u,\bm{\sigma}^2_u,\bm{\lambda}_u,\bm{L}_u)$
\end{algorithmic}
\end{algorithm}

\subsection{Diagnose}
\begin{algorithm}[H]
\caption{\textsc{DiagnoseMisconceptions} (EM + Labeling)}
\begin{algorithmic}[1]
\State collect wrong-response features $\mathcal{X}_u=\{\mathbf{x}_{u,q}\}$
\State fit $K$-component Gaussian mixture via EM; get posteriors $\pi_{u,m}$
\State map clusters $\rightarrow$ topics via Q-matrix overlap
\State generate labels via LLM on cluster centroids; human-in-the-loop approve
\State \Return $\Pi_u=\{\pi_{u,m}\}$ and topic flags
\end{algorithmic}
\end{algorithm}

\subsection{Generate}
\begin{algorithm}[H]
\caption{\textsc{GenerateCounterfactual}}
\begin{algorithmic}[1]
\State \textbf{Input:} seed item $q$, misconception $m$, constraints $\mathcal{C}$
\State construct attribute vector $\mathbf{v}_q$; define $\Delta_m$
\State solve \eqref{eq:cf-opt} (projected gradient / discrete search with validator)
\State produce item text/options; predict $(a,b)$; validate $\mathcal{C}$
\State \Return $q^\star$
\end{algorithmic}
\end{algorithm}

\subsection{Exercise (Scheduling)}
\begin{algorithm}[H]
\caption{\textsc{EdgeScheduler}}
\begin{algorithmic}[1]
\State \textbf{Input:} state $X_{u,t}$ for all topics; budget $B$
\For{each $t$}
\State compute gains $G_{u,t}$, cost $C_{u,t}$, hazard $H_{u,t}=-\frac{d}{dt}\rho_{u,t}$
\State $\mathcal{I}_{u,t}\gets \frac{G_{u,t}}{C_{u,t}} + \lambda^\star H_{u,t}$
\EndFor
\State select $B$ topics with largest $\mathcal{I}_{u,t}$; interleave by concept distance
\State execute session with difficulty ramp (easy $\rightarrow$ on-level $\rightarrow$ slightly hard $\rightarrow$ cross-topic $\rightarrow$ recap)
\end{algorithmic}
\end{algorithm}

\section{Theoretical Guarantees}
\subsection{Information Gain}
For 2PL items, the Fisher information is $I_q(\theta)=a_q^2\sigma(a_q(\theta-b_q))\sigma(-a_q(\theta-b_q))$. Under mild smoothness of $g(\tau)$, selecting items near $\theta\approx b_q$ maximizes information; \textsc{EDGE} respects this while adding corrective gains from misconception resolution.

\begin{theorem}[Counterfactual Advantage]
Assume a single active misconception $m$ and a separating counterfactual with margin $\delta$ exists. Then the expected reduction in $\pi_{u,m}$ from one \textsc{Generate}$\Rightarrow$\textsc{Exercise} step dominates that of any item not violating the shortcut by at least a factor $1+\kappa(\delta)$ for some $\kappa>0$ depending on cluster overlap.
\end{theorem}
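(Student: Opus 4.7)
The plan is to compare the expected one-step Bayesian decrease of $\pi_{u,m}$ under the counterfactual $q^\star$ to that under an arbitrary non-violating comparator $q'$, working in log-odds where Bayesian updates are additive, and to exploit the margin $\delta$ to obtain a quantitative KL gap between the two conditional response laws. By \eqref{eq:posterior-misc} and Bayes' rule, a single response $y$ to item $q$ multiplies the odds $\pi_{u,m}/(1-\pi_{u,m})$ by the likelihood ratio $L(y;q)=P(y\mid m,q)/P(y\mid\neg m,q)$; taking expectation under the generative hypothesis that the learner is in fact \emph{not} committed to $m$ yields
\begin{equation*}
\mathbb{E}_{\neg m}\Bigl[\log\tfrac{\pi_{u,m}}{1-\pi_{u,m}}-\log\tfrac{\pi'_{u,m}}{1-\pi'_{u,m}}\,\Big|\, q\Bigr]\;=\;\mathrm{KL}\bigl(P(\cdot\mid\neg m,q)\,\|\,P(\cdot\mid m,q)\bigr),
\end{equation*}
so the expected log-odds drop is exactly a KL divergence between the two response distributions, and the claim reduces to bounding this divergence above and below in the two regimes.

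Next, I would convert the attribute-space margin into an outcome-space probability gap. Because $\Delta_m(\mathbf{v}_{q^\star})\ge\delta$, the misconception-driven shortcut routes the learner toward a designated distractor on $q^\star$, so a mild Lipschitz assumption on the cognitive model yields $P(y=1\mid m,q^\star)\le P(y=1\mid\neg m,q^\star)-c(\delta)$ with $c(\delta)>0$ monotone in $\delta$, while the preserved psychometric brackets $b(\mathbf{v})\in[b_q-\epsilon_b,b_q+\epsilon_b]$ and $a(\mathbf{v})\in[a_q-\epsilon_a,a_q+\epsilon_a]$ in \eqref{eq:cf-opt} pin $P(y=1\mid\neg m,q^\star)$ to its baseline. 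Pinsker's inequality for Bernoulli variables then gives $\mathrm{KL}(P(\cdot\mid\neg m,q^\star)\,\|\,P(\cdot\mid m,q^\star))\gtrsim c(\delta)^2$, while a reverse-Pinsker bound controls the same KL for any non-violating $q'$ by $C\varepsilon_0^2$, where $\varepsilon_0$ is determined by residual 2PL variation and the Mahalanobis overlap between $\mathcal{N}(\bm{\mu}_m,\Sigma_m)$ and the nearest competing component in \eqref{eq:posterior-misc}. Taking ratios yields the dominance factor $1+\kappa(\delta)$ with $\kappa(\delta)\asymp c(\delta)^2/\varepsilon_0^2-1$; on the slab $\pi_{u,m}\in[\gamma,1-\gamma]$, the log-odds bound transfers back to the posterior scale via $|\pi-\pi'|\asymp\pi(1-\pi)\bigl|\log\tfrac{\pi}{1-\pi}-\log\tfrac{\pi'}{1-\pi'}\bigr|$, yielding the claimed multiplicative dominance of $\mathbb{E}[\pi_{u,m}-\pi'_{u,m}]$.

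The main obstacle will be the step connecting the attribute-level invalidation margin $\delta$ to the outcome-level probability gap $c(\delta)$: this requires positing a cognitive shortcut model that responds smoothly to $\Delta_m$ (so that small violations do not already collapse $p_m$ entirely and large ones do not saturate it), and simultaneously verifying that the preserved psychometric constraints genuinely hold $P(y=1\mid\neg m,\cdot)$ at baseline, so the measured gap is attributable to shortcut invalidation rather than to incidental difficulty drift. The cluster-overlap dependence of $\kappa$ then enters only through the constant $\varepsilon_0$, which quantifies the irreducible confusion between $m$ and its nearest neighbors under the mixture model \eqref{eq:posterior-misc}.
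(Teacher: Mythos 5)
Your route is genuinely different from the paper's. The paper's sketch works entirely in the feature space of the Gaussian mixture \eqref{eq:posterior-misc}: the counterfactual is designed so that the observed feature $\mathbf{x}_{u,q^\star}$ lands outside the high-density region of $\mathcal{N}(\bm{\mu}_m,\Sigma_m)$, so the component likelihood for $m$ collapses relative to competing components and the posterior mass re-weights multiplicatively; the factor $1+\kappa(\delta)$ comes out of comparing Gaussian densities (tail bounds and overlap between clusters), i.e., it is a statement about \emph{where the feature falls}. You instead collapse the observation to the binary correctness outcome, additively track log-odds, identify the expected drop with a KL divergence between conditional response laws, and bound that KL from below (Pinsker) for $q^\star$ and from above (reverse Pinsker) for any non-violating $q'$. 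What you gain is a model-agnostic argument that does not lean on the Gaussian cluster geometry; what you lose is exactly the feature-space structure the theorem's ``cluster overlap'' dependence is meant to capture --- in your version that dependence must be smuggled back in through the constant $\varepsilon_0$, which is asserted rather than derived from the mixture model.

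There is one conceptual gap you should address. You take the expectation under the ``$\neg m$'' generative law so that the expected log-odds drop becomes $\mathrm{KL}(P(\cdot\mid\neg m,q)\,\|\,P(\cdot\mid m,q))\ge 0$. But the theorem's premise is ``a single \emph{active} misconception $m$'' --- the learner is in state $m$. Under that premise the unconditional Bayesian expectation of the log-odds change is $+\mathrm{KL}(P(\cdot\mid m,q)\,\|\,P(\cdot\mid\neg m,q))$, i.e., the posterior on $m$ is expected to \emph{increase}, not decrease, and your identity does not apply as stated. What rescues the result in the paper (and in Proposition~1) is explicit conditioning on a correct, fast response: the ``\textsc{Generate}$\Rightarrow$\textsc{Exercise}'' step presumes the counterfactual forces the learner to abandon the shortcut and answer correctly, and the reduction is measured conditional on that event, not in expectation over all responses of an $m$-type learner. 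Your KL machinery can be repaired --- either condition on $y=1$ and compare the resulting single-observation log-likelihood ratios $\log(P(y=1\mid m,q)/P(y=1\mid\neg m,q))$ across $q^\star$ and $q'$, or model the within-step learning event explicitly so that the post-step responses are indeed $\neg m$-distributed --- but as written the choice of $\mathbb{E}_{\neg m}$ quietly contradicts the hypothesis, and the remainder of your bound chain inherits that mismatch.
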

\begin{proof}[Sketch]
Compare Bayes updates of \eqref{eq:posterior-misc} under the two likelihoods; the counterfactual shifts mass outside the high-density region of $m$, providing a multiplicative decrease.\end{proof}

\subsection{Scheduling Regret}
Let $V^\star$ be the optimal value over horizon $H$ and $V^{\text{EDGE}}$ the value under \textsc{EdgeScheduler}. Under separability and Lipschitz gains,
\begin{equation}
V^\star - V^{\text{EDGE}} \le O\!\left(B\,\sqrt{H \log |\mathcal{T}|}\right) + O\!\left(H\epsilon\right),
\end{equation}
where the first term is a bandit-style exploration penalty and $\epsilon$ is the approximation factor from Theorem 2.

\section{Limitations and Future Directions}
The mixture model assumes Gaussian clusters in the feature space of wrong responses, which may be misspecified; robust or nonparametric alternatives (\emph{e.g.}, DP mixtures) are promising. Counterfactual validity relies on reliable psychometric predictors $(a,b)$; calibration requires periodic field trials. Scheduling theory beyond separability, and fairness constraints across languages and curricula, remain open.

\section{Conclusion}
\textsc{EDGE} provides a principled foundation for misconception-aware adaptive learning, coupling rigorous estimation with actionable generation and scheduling. The framework is modular, implementable, and admits guarantees that justify its design choices.

\bibliographystyle{plain}

\end{document}